\documentclass[11pt]{article}

\usepackage[T1]{fontenc}
\usepackage{amsmath,amssymb,amsthm,mathtools,mathrsfs}
\usepackage{newtxtext,newtxmath}
\usepackage[margin=1.35in]{geometry}
\usepackage{microtype}
\usepackage{xcolor}
\usepackage{hyperref}
\usepackage{authblk}

\allowdisplaybreaks

\newtheorem{theorem}{Theorem}[section]
\newtheorem{proposition}[theorem]{Proposition}
\newtheorem{lemma}[theorem]{Lemma}

\theoremstyle{definition}
\newtheorem{definition}[theorem]{Definition}
\theoremstyle{remark}
\newtheorem{remark}[theorem]{Remark}

\DeclareMathOperator*{\argmin}{arg\,min}
\DeclareMathOperator*{\argmax}{arg\,max}
\DeclareMathOperator{\IG}{IG}
\newcommand{\KL}{D_{\mathrm{KL}}}
\newcommand{\JS}{D_{\mathrm{JS}}}
\newcommand{\1}{\mathbf 1}

\title{Identifying Intelligent Processes via Online Sequential Testing}
\author[1]{Aritra Das}
\author[1]{Debayan Gupta}
\affil[1]{Truth Audit Labs}
\date{}

\begin{document}

\maketitle

\begin{abstract}
Active sequential hypothesis testing studies how to identify an unknown hypothesis with a given set of sensing actions. We study this in the setting of identifying large language models (LLMs), \textit{i.e.}, if a user is conversing with an LLM drawn from a known set of models, how can they identify which one is in use? Here, the available sensing actions (evaluations) are themselves a design choice: an evaluator must first decide which environments and prompt families to construct, and only then decide how to use them sequentially. We formalize these two levels as an outer probe-design problem and an inner identification problem. Simply put, the outer stage selects a set of probes to be sent to the entire set of models, creating a kind of fingerprint dataset. This is followed by the inner stage, which sequentially sends a budget-minimizing set of those probes to identify the model in use. For the outer problem, we show that selecting which evaluations to construct at minimum cost, so that every pair of candidates is distinguished, is exactly a weighted set cover problem. Since the response distributions of the candidate models are not known exactly but only through calibration samples, we give a one-shot procedure that estimates the cover instance from these samples. For the inner problem, we bound the number of evaluations needed to identify the unknown model in terms of how well the available evaluations distinguish each pair of candidates.

\end{abstract}

\section{Introduction}

Suppose one of $M$ candidate language models is placed behind a black-box
interface.  An evaluator may choose prompts or interactive tasks, observe the
responses, and adapt future evaluations to the data collected so far.  This is
an instance of active sequential hypothesis testing.  The classical
formulation assumes that the sensing actions and their observation laws are
fixed in advance; the problem is to choose actions, a stopping time, and a
final declaration \cite{naghshvar2013active}.

That formulation captures only the second stage of TIPTOE.  Before a
sequential test can be run, one must decide which probes to build and retain.
Constructing an environment, or curating a prompt family
can incur a one-time cost even when the resulting probe is rarely used.  This
choice changes the action set itself and therefore cannot be represented by a
policy that merely selects among already available actions. This leads to two things. First, a  pairwise separation margin gives a guaranty 
of the inner sample complexity

\section{Sequential identification under a fixed design}
\label{sec:fixed-design}

\subsection{Experiment and posterior}

Fix an integer $M\geq2$.  Write $[M]:=\{1,\ldots,M\}$ and let
$\Theta:=\{\theta_i:i\in[M]\}$ be a finite collection of candidate models.
An index $I^\star\in[M]$ is drawn from a prior $b_0\in\Delta_M$, where
\[
    \Delta_M
    :=
    \left\{
        b\in[0,1]^M:
        \sum_{i=1}^M b(i)=1
    \right\},
\]
and $b_0(i)>0$ for every $i$.  All logarithms are natural.

Let $(\mathcal A,\mathscr A)$ be a measurable space of evaluations and let
$(\mathsf Y,\mathcal Y)$ be the response space.  For each $i\in[M]$, let
$P_i$ be a Markov kernel from $(\mathcal A,\mathscr A)$ to
$(\mathsf Y,\mathcal Y)$, and write
\[
    P_i^a(B):=P_i(a,B),
    \qquad a\in\mathcal A,\quad B\in\mathcal Y.
\]
Thus $P_i^a$ is the response law of model $i$ under evaluation $a$.  An
evaluation may be a prompt or an environment--task pair.  A response may be a
completion, an interaction trajectory, or any measurable record of the
interaction.

At time $t\geq1$, the evaluator chooses $A_t$ from the history
\[
    H_{t-1}:=((A_1,Y_1),\ldots,(A_{t-1},Y_{t-1})),
    \qquad H_0:=\varnothing,
\]
and then observes $Y_t$.  Let $\mathcal F_t:=\sigma(H_t)$.  A policy
$\pi=(\pi_t)_{t\geq1}$ is a sequence of probability kernels such that, for
every $C\in\mathscr A$,
\[
    \mathbb P(A_t\in C\mid\mathcal F_{t-1})
    =\pi_t(C\mid H_{t-1})
    \quad\text{almost surely}.
\]

We assume that each evaluation starts from a fresh model context.  Formally,
for every $i\in[M]$ and $B\in\mathcal Y$,
\begin{equation}
    \mathbb P\!\left(
        Y_t\in B
        \mid I^\star=i,\mathcal F_{t-1},A_t
    \right)
    =P_i^{A_t}(B)
    \quad\text{almost surely}.
    \label{eq:reset}
\end{equation}
We write $\mathbb P_{b_0}^{\pi}$ and $\mathbb E_{b_0}^{\pi}$ for probability
and expectation under the induced law.

For $a\in\mathcal A$, let $\mu^a:=\sum_{i=1}^M P_i^a$.  Assume that the
Radon--Nikodym derivatives have jointly measurable versions, and write
\[
    p_i^a(y):=\frac{\mathrm dP_i^a}{\mathrm d\mu^a}(y).
\]
The posterior belief
\[
    b_t(i):=\mathbb P_{b_0}^{\pi}(I^\star=i\mid\mathcal F_t)
\]
satisfies
\begin{equation}
    b_t(i)
    =
    \frac{b_{t-1}(i)p_i^{A_t}(Y_t)}
    {\sum_{j=1}^M b_{t-1}(j)p_j^{A_t}(Y_t)},
    \qquad i\in[M],
    \label{eq:posterior}
\end{equation}
almost surely.  The denominator is positive under the predictive law, so the
update is defined almost surely.

For a belief $b\in\Delta_M$ and evaluation $a\in\mathcal A$, define the
predictive response law
\[
    \overline P_b^a:=\sum_{i=1}^M b(i)P_i^a
\]
and the expected one-step information gain
\begin{equation}
    \IG(b,a)
    :=
    \sum_{i:b(i)>0}
    b(i)\,
    \KL\!\left(P_i^a\,\middle\|\,\overline P_b^a\right).
    \label{eq:information-gain}
\end{equation}
This is the mutual information between $I^\star$ and the next response,
conditional on the current belief and the chosen evaluation
\cite[Chapter~2]{coverthomas2006}.

\subsection{The inner identification problem}

A probe design $\gamma$ determines a measurable set
$\mathcal A_\gamma\subseteq\mathcal A$ of available evaluations.  Let
$\Pi_\gamma$ be the policies supported on $\mathcal A_\gamma$.

\begin{definition}[Fixed-design value]
\label{def:inner-value}
Fix $\delta\in(0,1)$.  An admissible procedure under design $\gamma$ consists
of a policy $\pi\in\Pi_\gamma$, an almost surely finite stopping time $\tau$,
and an $\mathcal F_\tau$-measurable estimate $\widehat I\in[M]$.  Define
\begin{equation}
    V_\delta(\gamma;b_0)
    :=
    \inf_{\pi,\tau,\widehat I}
    \mathbb E_{b_0}^{\pi}[\tau]
    \quad\text{subject to}\quad
    \mathbb P_{b_0}^{\pi}(\widehat I\neq I^\star)\leq\delta,
    \label{eq:inner-objective}
\end{equation}
where the infimum of an empty feasible set is $+\infty$.
\end{definition}

This is the constrained, or primal, version of active sequential hypothesis
testing after the sensing actions and response kernels have been fixed
\cite[Sections~2.1 and~5.3]{naghshvar2013active}.

A standard stopping rule declares a model once its posterior exceeds the
desired confidence level.  Let
\begin{equation}
    \tau_\delta
    :=
    \inf\left\{
        t\geq0:\|b_t\|_\infty\geq1-\delta
    \right\},
    \qquad \inf\varnothing:=\infty,
    \label{eq:posterior-stopping-time}
\end{equation}
and, on $\{\tau_\delta<\infty\}$, let
$\widehat I_\delta\in\argmax_i b_{\tau_\delta}(i)$.

\begin{proposition}[Posterior confidence implies Bayes accuracy]
\label{prop:posterior-confidence}
If $\tau_\delta<\infty$ almost surely, then
\[
    \mathbb P_{b_0}^{\pi}(\widehat I_\delta\neq I^\star)
    =
    \mathbb E_{b_0}^{\pi}\!\left[
        1-\max_{i\in[M]}b_{\tau_\delta}(i)
    \right]
    \leq\delta.
\]
\end{proposition}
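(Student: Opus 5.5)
The plan is to condition on the stopped $\sigma$-field $\mathcal F_{\tau_\delta}$ and use the fact that the posterior at the stopping time is the conditional law of $I^\star$ given the stopped history. The identity then follows by a tower argument, and the inequality is immediate from the definition of $\tau_\delta$.

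\emph{Measurability.} Since $\tau_\delta<\infty$ almost surely, on that event we fix a measurable tie-breaking rule, say the smallest index attaining $\max_i b_{\tau_\delta}(i)$. This makes $\widehat I_\delta$ an $\mathcal F_{\tau_\delta}$-measurable random variable. Each $b_t$ is $\mathcal F_t$-measurable by the recursion \eqref{eq:posterior}, and $\tau_\delta$ is an $(\mathcal F_t)$-stopping time because $\{\tau_\delta\le t\}$ is determined by $b_0,\dots,b_t$.

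\emph{Posterior at the stopping time.} I claim that, almost surely on $\{\tau_\delta<\infty\}$,
\[
    b_{\tau_\delta}(i)=\mathbb P_{b_0}^{\pi}(I^\star=i\mid\mathcal F_{\tau_\delta}).
\]
Decompose over the events $\{\tau_\delta=t\}$. For $F\in\mathcal F_{\tau_\delta}$, we have $F\cap\{\tau_\delta=t\}\in\mathcal F_t$, so
\[
    \mathbb E\bigl[\1\{I^\star=i\}\1_{F\cap\{\tau_\delta=t\}}\bigr]
    =\mathbb E\bigl[b_t(i)\1_{F\cap\{\tau_\delta=t\}}\bigr]
\]
by the definition of $b_t$. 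Summing over $t\ge0$ and using $\tau_\delta<\infty$ a.s. gives the claim. This is the only step requiring care, though it is the standard optional-stopping fact for the closed martingale $\mathbb P(I^\star=i\mid\mathcal F_t)$.

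\emph{Error identity.} Since $\widehat I_\delta$ is $\mathcal F_{\tau_\delta}$-measurable,
\[
    \mathbb P(\widehat I_\delta=I^\star\mid\mathcal F_{\tau_\delta})
    =\sum_i \1\{\widehat I_\delta=i\}\,b_{\tau_\delta}(i)
    =b_{\tau_\delta}(\widehat I_\delta)
    =\max_i b_{\tau_\delta}(i).
\]
Taking expectations gives the stated equality. Finally, $\|b_{\tau_\delta}\|_\infty\ge1-\delta$ holds on $\{\tau_\delta<\infty\}$, which has full probability, because the infimum defining $\tau_\delta$ is attained over integer times. Hence $1-\max_i b_{\tau_\delta}(i)\le\delta$ almost surely, and the expectation is at most $\delta$.
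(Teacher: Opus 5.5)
Your proposal is correct and follows the same route as the paper: condition on $\mathcal F_{\tau_\delta}$, observe that the maximum-posterior declaration then errs with conditional probability $1-\max_i b_{\tau_\delta}(i)$, take expectations, and invoke the stopping rule. The paper states this in one line. You additionally justify the step it leaves implicit, $b_{\tau_\delta}(i)=\mathbb P_{b_0}^{\pi}(I^\star=i\mid\mathcal F_{\tau_\delta})$, by decomposing over $\{\tau_\delta=t\}$, and you fix a measurable tie-breaking rule for $\widehat I_\delta$; both additions are sound.
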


\begin{proof}
Conditioned on $\mathcal F_{\tau_\delta}$, a maximum-posterior declaration has
error probability $1-\max_i b_{\tau_\delta}(i)$.  Taking expectations gives
the identity, and the stopping rule gives the inequality.
\end{proof}

\begin{remark}[Greedy information gain]
\label{rem:greedy-ig}
When $\mathcal A_\gamma$ is finite, the myopic rule
\[
    A_{t+1}\in\argmax_{a\in\mathcal A_\gamma}\IG(b_t,a)
\]
is well defined after fixing a tie-breaking rule.  It is one admissible inner
policy; it is not the definition of $V_\delta$, and no optimality claim for
this greedy rule is used below.
\end{remark}
.
\section{Outer probe design}
\label{sec:outer-design}

\subsection{A finite dictionary of modules}

Let $\mathcal U$ be a finite nonempty collection of probe modules.  Each
module $u\in\mathcal U$ has a positive construction cost $\kappa_u$ and makes
a nonempty finite set of evaluations $\mathcal A_u\subseteq\mathcal A$
available.  A design is a subset $S\subseteq\mathcal U$, with
\begin{equation}
    C(S):=\sum_{u\in S}\kappa_u,
    \qquad
    \mathcal A_S:=\bigcup_{u\in S}\mathcal A_u.
    \label{eq:design-cost-actions}
\end{equation}
Let $\gamma_S$ denote the design with action set $\mathcal A_S$.  Construction
cost is paid once, before the unknown model is tested.  Different modules may
share evaluations; module costs are paid separately, while duplicate
evaluations occur only once in $\mathcal A_S$.

For a construction budget $B\geq0$, the exact outer problem is
\begin{equation}
    V_\delta^\star(B;b_0)
    :=
    \min_{S\subseteq\mathcal U:\,C(S)\leq B}
    V_\delta(\gamma_S;b_0).
    \label{eq:exact-outer-objective}
\end{equation}
The minimum is attained because the dictionary is finite, although its value
may be infinite.

\begin{proposition}[Monotonicity]
\label{prop:monotonicity}
If $\mathcal A_S\subseteq\mathcal A_T$, then
\[
    V_\delta(\gamma_T;b_0)
    \leq
    V_\delta(\gamma_S;b_0).
\]
\end{proposition}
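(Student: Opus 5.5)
The inequality will follow from inclusion of feasible sets: every admissible procedure under $\gamma_S$ is also admissible under $\gamma_T$, with the same expected stopping time and the same error probability. Taking the infimum over the larger set in Definition~\ref{def:inner-value} can then only lower the value.

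Fix any triple $(\pi,\tau,\widehat I)$ that is feasible for $V_\delta(\gamma_S;b_0)$. By definition, $\pi\in\Pi_{\gamma_S}$, so for every $t$ and every history, $\pi_t(\mathcal A_S\mid H_{t-1})=1$. Since $\mathcal A_S\subseteq\mathcal A_T$ and both sets are measurable (they are finite unions of finite sets of evaluations), we get $\pi_t(\mathcal A_T\mid H_{t-1})\geq\pi_t(\mathcal A_S\mid H_{t-1})=1$. Hence $\pi\in\Pi_{\gamma_T}$.

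The induced law $\mathbb P_{b_0}^{\pi}$ depends only on three ingredients: the prior $b_0$, the policy kernels $\pi_t$, and the response kernels $P_i$ through~\eqref{eq:reset}. None of these refers to the design. The joint law of $(I^\star,H_1,H_2,\ldots)$ is therefore identical whether the triple is viewed under $\gamma_S$ or under $\gamma_T$. It follows that:
\begin{itemize}
\item the filtration $(\mathcal F_t)$ is the same in both views;
\item $\tau$ is still an almost surely finite stopping time, and $\widehat I$ is still $\mathcal F_\tau$-measurable;
\item $\mathbb E_{b_0}^{\pi}[\tau]$ and $\mathbb P_{b_0}^{\pi}(\widehat I\neq I^\star)\leq\delta$ are unchanged.
\end{itemize}
So the triple is feasible for $V_\delta(\gamma_T;b_0)$ with the same objective value. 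This gives $V_\delta(\gamma_T;b_0)\leq\mathbb E_{b_0}^{\pi}[\tau]$, and taking the infimum over feasible triples for $\gamma_S$ yields the claim.

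The case where the feasible set for $\gamma_S$ is empty needs no separate argument: then $V_\delta(\gamma_S;b_0)=+\infty$ and the inequality is trivial. The only step that needs care is the claim that the law of the process is design-independent. It rests on reading $\Pi_\gamma$ as a support restriction only, with the probability space and the kernel family $P_i$ fixed across designs. I will state this explicitly rather than leave it implicit.
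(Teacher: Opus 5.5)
Your proposal is correct and is the paper's argument: the paper's entire proof is that every procedure admissible under $\gamma_S$ is also admissible under $\gamma_T$, and your check that the induced law, filtration, stopping-time property and error probability do not depend on the design just spells this out. One minor correction: measurability of $\mathcal A_S$ comes from the paper's standing assumption that a design determines a measurable action set, not from finiteness, since finite sets need not belong to a general $\sigma$-algebra $\mathscr A$.
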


\begin{proof}
Every procedure admissible under $\gamma_S$ is also admissible under
$\gamma_T$.
\end{proof}

\begin{remark}[Per-use costs]
If evaluations have different online costs, one may replace $\tau$ in
\eqref{eq:inner-objective} by the accumulated cost
$\sum_{t=1}^{\tau}c(A_t)$.  The construction cost $C(S)$ remains an outer,
one-time cost.
\end{remark}

\subsection{Finite response summaries and pairwise separation}
\label{sec:pairwise-separation}
The inner identification problem uses the full response distributions
$P_i^a$.  For the outer design problem, however, we must compare these
distributions using a finite number of calibration samples.  This is
difficult when responses are long texts or interaction histories, because
the number of possible responses can be very large.  We therefore map each
response to one of $K$ categories using a measurable summary
\[
    \varphi_a:\mathsf{Y}\to[K].
\]
The categories may represent, for example, success or failure, a score
range, or a type of error.  The probability of each category can then be
estimated from its frequency in the calibration samples.  This summary may
remove some differences between models.  Therefore, different summarized
distributions imply different full response distributions, but different
full response distributions need not produce different summarized
distributions.

Let
\[
    \mathcal J:=\{(i,j):1\leq i<j\leq M\},
    \qquad
    m:=|\mathcal J|=\binom M2.
\]
For a module $u$ and pair $(i,j)\in\mathcal J$, define
\begin{equation}
    d_{ij}(u)
    :=
    \max_{a\in\mathcal A_u}
    \|Q_i^a-Q_j^a\|_1.
    \label{eq:module-separation}
\end{equation}
For a design $S$, define its worst-pair margin
\begin{equation}
    \Delta(S)
    :=
    \min_{(i,j)\in\mathcal J}
    \max_{u\in S}d_{ij}(u)
    =
    \min_{(i,j)\in\mathcal J}
    \max_{a\in\mathcal A_S}
    \|Q_i^a-Q_j^a\|_1,
    \label{eq:design-margin}
\end{equation}
with the convention that the maximum over an empty set is zero.  We call $S$
$\eta$-separating if $\Delta(S)\geq\eta$.

It is to be noted that, the margin $\Delta(S)$ is useful as an outer-design objective only if it provides a guarantee for the inner identification problem. In the following theorem, we gives such a guarantee. It converts the pairwise $\ell_1$ separation supplied by an $\eta$-separating design into binary pairwise tests and combines them in a tournament. This leads to an an upper bound on $V_\delta(\gamma_S,b_0)$
\begin{theorem}[Witness bound]
\label{thm:witness-bound}
Let $S$ be $\eta$-separating for some $\eta>0$.  Then, for every prior $b_0$
and every $\delta\in(0,1)$,
\begin{equation}
    V_\delta(\gamma_S;b_0)
    \leq
    (M-1)
    \left\lceil
        \frac{8}{\eta^2}
        \log\frac{M-1}{\delta}
    \right\rceil.
    \label{eq:witness-bound}
\end{equation}
\end{theorem}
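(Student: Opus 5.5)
The bound is achieved by an explicit non-adaptive tournament procedure. Its length is deterministic, so it suffices to construct a policy in $\Pi_{\gamma_S}$ with stopping time $\tau\equiv (M-1)n$, where $n:=\lceil (8/\eta^2)\log((M-1)/\delta)\rceil$, together with an estimate $\widehat I$ whose error probability is at most $\delta$ under every $I^\star=i$. Averaging over $b_0$ then gives Bayes error at most $\delta$, so the procedure is feasible in \eqref{eq:inner-objective} and $V_\delta(\gamma_S;b_0)\le (M-1)n$.

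\textbf{Step 1 (pairwise witness tests).} For each pair $(i,j)\in\mathcal J$, $\eta$-separation gives a witness $a_{ij}\in\mathcal A_S$ with $\|Q_i^{a_{ij}}-Q_j^{a_{ij}}\|_1\ge\eta$. Here $Q_k^a$ is the law of $\varphi_a(Y)$ under $P_k^a$. Take the Scheffé set $E_{ij}:=\{c\in[K]:Q_i^{a_{ij}}(c)>Q_j^{a_{ij}}(c)\}$, so that $Q_i^{a_{ij}}(E_{ij})-Q_j^{a_{ij}}(E_{ij})=\tfrac12\|Q_i^{a_{ij}}-Q_j^{a_{ij}}\|_1\ge\eta/2$. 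Query $a_{ij}$ $n$ times; by the fresh-context assumption \eqref{eq:reset} the responses are i.i.d. Let $\hat q$ be the empirical frequency of $\{\varphi_{a_{ij}}(Y)\in E_{ij}\}$, and declare $i$ the winner iff $\hat q$ exceeds the midpoint $\tfrac12\bigl(Q_i^{a_{ij}}(E_{ij})+Q_j^{a_{ij}}(E_{ij})\bigr)$. By Hoeffding's inequality, if the true model is $i$ or $j$, the wrong one wins with probability at most $\exp(-2n(\eta/4)^2)=\exp(-n\eta^2/8)\le\delta/(M-1)$ by the choice of $n$. This is where the constant $8$ comes from.

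\textbf{Step 2 (knockout tournament).} Run a sequential elimination. Start with champion $c_1:=1$, and for $k=2,\dots,M$ play the test between the current champion and $k$, letting the winner become the champion. Output $\widehat I:=$ the final champion. This uses exactly $M-1$ matches, hence $(M-1)n$ evaluations, and the policy is deterministic and adapted. If $I^\star=i$, then $i$ is eventually compared with the champion, either at step $i$ or, for $i=1$, throughout. From the moment $i$ is champion or enters, it must win each of at most $M-1$ matches in which it participates. Matches not involving $i$ are irrelevant: once $i$ loses it never returns, and while $i$ holds the title every match involves $i$. Before $i$'s entry, whoever is champion does not matter. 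So error requires $i$ to lose one of at most $M-1$ matches, and a union bound gives error at most $(M-1)\cdot\delta/(M-1)=\delta$.

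\textbf{Main obstacle.} The delicate point is the union bound in Step 2 combined with adaptivity. The witness used in a match depends on previous outcomes, so I would condition on $\mathcal F$ at the start of each match and apply \eqref{eq:reset} to get conditional i.i.d.-ness of that match's $n$ responses under $I^\star=i$. The Hoeffding bound then holds conditionally and hence unconditionally for the event that $i$ loses that match. I would also check that the Hoeffding threshold uses the known $Q$'s, which is legitimate since the kernels are fixed in the inner problem. Finally, $\tau$ being a constant makes it a.s. finite and a stopping time.
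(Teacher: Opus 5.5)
Your proposal is correct and follows the paper's proof essentially step for step. Both use a Scheff\'e-set witness for each pair with gap at least $\eta/2$, and a midpoint Hoeffding test with $n=\lceil (8/\eta^2)\log((M-1)/\delta)\rceil$ repetitions giving pairwise error at most $\exp(-n\eta^2/8)$. Both then run a sequential knockout tournament in which the true candidate plays at most $M-1$ matches, so a union bound gives error at most $\delta$ with exactly $(M-1)n$ evaluations. You also condition explicitly on the history at the start of each match to get conditional i.i.d.\ responses under the fresh-context assumption; the paper leaves this detail implicit, and it does not change the argument.
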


\begin{proof}
For every ordered pair $r\neq s$, choose an evaluation $a_{rs}\in\mathcal A_S$
with $\|Q_r^{a_{rs}}-Q_s^{a_{rs}}\|_1\geq\eta$.  For categorical laws,
\[
    \|p-q\|_1
    =2\max_{E\subseteq[K]}\bigl(p(E)-q(E)\bigr).
\]
Hence there is a set $E_{rs}\subseteq[K]$ such that
\[
    Q_r^{a_{rs}}(E_{rs})-Q_s^{a_{rs}}(E_{rs})
    \geq \frac{\eta}{2}.
\]
Let
\[
    n
    :=
    \left\lceil
        \frac{8}{\eta^2}
        \log\frac{M-1}{\delta}
    \right\rceil.
\]
To compare $r$ and $s$, run $a_{rs}$ independently $n$ times and let
$\overline Z$ be the average of
$\1\{\varphi_{a_{rs}}(Y)\in E_{rs}\}$.  Declare $r$ if $\overline Z$ is at
least the midpoint of the two candidate means, and declare $s$ otherwise.
Hoeffding's inequality gives, under either candidate,
\begin{equation}
    \mathbb P(\text{pairwise error})
    \leq
    \exp\!\left(-\frac{n\eta^2}{8}\right).
    \label{eq:pairwise-test-error}
\end{equation}

It remains to combine the pairwise tests without testing all $\binom M2$
pairs.  Start with candidate $1$ as the current winner.  For
$j=2,\ldots,M$, compare the current winner with candidate $j$ and retain the
winner of that comparison.  If the true candidate is $i$, comparisons made
before $i$ enters the tournament are irrelevant.  Once $i$ enters, it remains
the current winner provided that every later comparison involving $i$ is
correct.  There are at most $M-1$ such comparisons.  A union bound and
\eqref{eq:pairwise-test-error} therefore give
\[
    \sup_{i\in[M]}
    \mathbb P_i(\widehat I\neq i)
    \leq
    (M-1)\exp\!\left(-\frac{n\eta^2}{8}\right)
    \leq\delta.
\]
The procedure uses exactly $n(M-1)$ evaluations, which proves
\eqref{eq:witness-bound}.
\end{proof}

\begin{remark}[Information-theoretic interpretation]
The Jensen--Shannon divergence \cite{lin1991} is
\[
    \JS(P,Q)
    :=
    \frac12\KL\!\left(P\middle\|\frac{P+Q}{2}\right)
    +
    \frac12\KL\!\left(Q\middle\|\frac{P+Q}{2}\right).
\]
With natural logarithms, Pinsker's inequality and data processing imply
\begin{equation}
    \frac18\|Q_i^a-Q_j^a\|_1^2
    \leq
    \JS(Q_i^a,Q_j^a)
    \leq
    \JS(P_i^a,P_j^a).
    \label{eq:js-relation}
\end{equation}
See, for example, \cite[Lemma~2.5]{tsybakov2009} for Pinsker's inequality and
\cite[Chapter~2]{coverthomas2006} for data processing.  Thus an
$\eta$-separating design provides, for each pair, a retained evaluation with
raw Jensen--Shannon divergence at least $\eta^2/8$.
\end{remark}

\begin{remark}[What the margin does and does not prove]
The margin $\Delta(S)$ gives a universal upper bound on the inner value through
Theorem~\ref{thm:witness-bound}.  It does not determine that value.  The exact
sequential problem depends on the prior, the full response laws, asymmetric
KL divergences, and the possibility of combining evaluations adaptively.
\end{remark}

\subsection{The fixed-margin design problem}
\label{sec:set-cover}

Fix $\eta\in(0,2]$.  For each module $u$, define the candidate pairs it
separates at margin $\eta$:
\begin{equation}
    \mathcal C_u(\eta)
    :=
    \{(i,j)\in\mathcal J:d_{ij}(u)\geq\eta\}.
    \label{eq:cover-set}
\end{equation}

\begin{proposition}[Exact reduction to weighted set cover]
\label{prop:set-cover}
A design $S\subseteq\mathcal U$ is $\eta$-separating if and only if
\begin{equation}
    \bigcup_{u\in S}\mathcal C_u(\eta)=\mathcal J.
    \label{eq:cover-equivalence}
\end{equation}
Consequently, a minimum-cost $\eta$-separating design solves
\begin{equation}
    S_\eta^\star
    \in
    \argmin_{S\subseteq\mathcal U} C(S)
    \quad\text{subject to}\quad
    \bigcup_{u\in S}\mathcal C_u(\eta)=\mathcal J.
    \label{eq:min-cost-cover}
\end{equation}
If no cover exists, set $C(S_\eta^\star):=+\infty$.
\end{proposition}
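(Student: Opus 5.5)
The plan is to unfold the definitions of $\Delta(S)$ and $\mathcal C_u(\eta)$ and check that both sides of \eqref{eq:cover-equivalence} say the same thing pair by pair. The minimum-cost statement will then follow by restricting the feasible set of designs.

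\textbf{First step.} By \eqref{eq:design-margin}, $\Delta(S)\geq\eta$ holds if and only if
\[
    \max_{u\in S}d_{ij}(u)\geq\eta
    \qquad\text{for every }(i,j)\in\mathcal J.
\]
For this direction I will use that $\mathcal J$ is finite and nonempty, since $M\geq2$, so the outer minimum over pairs is attained. Then a minimum is at least $\eta$ exactly when every term is.

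\textbf{Second step: fixed pair.} Fix $(i,j)$ and split into cases.
\begin{itemize}
    \item If $S=\varnothing$, the convention gives $\max_{u\in S}d_{ij}(u)=0<\eta$, because $\eta>0$. Hence the condition fails. Likewise $\bigcup_{u\in\varnothing}\mathcal C_u(\eta)=\varnothing\neq\mathcal J$. So both sides of the equivalence are false, and this case is consistent.
    \item If $S\neq\varnothing$, the maximum over the finite set $S$ is attained. So $\max_{u\in S}d_{ij}(u)\geq\eta$ holds if and only if some $u\in S$ has $d_{ij}(u)\geq\eta$. By \eqref{eq:cover-set}, this says $(i,j)\in\mathcal C_u(\eta)$ for some $u\in S$, that is, $(i,j)\in\bigcup_{u\in S}\mathcal C_u(\eta)$.
\end{itemize}
Each $d_{ij}(u)$ is itself a maximum over the finite nonempty set $\mathcal A_u$, so it is well defined. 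I also note that $\mathcal C_u(\eta)\subseteq\mathcal J$ by definition. Quantifying over all pairs, the condition in the first step is therefore equivalent to $\mathcal J\subseteq\bigcup_{u\in S}\mathcal C_u(\eta)$, and hence to equality.

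\textbf{Consequence.} The set of $\eta$-separating designs coincides with the feasible set of \eqref{eq:min-cost-cover}. Minimizing $C(S)$ over identical feasible sets gives identical minimizers. A minimizer exists whenever the feasible set is nonempty, because $2^{\mathcal U}$ is finite. If the feasible set is empty, the stated convention $C(S_\eta^\star)=+\infty$ applies. This is exactly a weighted set cover instance with universe $\mathcal J$, sets $\mathcal C_u(\eta)$ and weights $\kappa_u$.

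\textbf{Main obstacle.} The only point that needs care is the boundary convention for empty maxima, together with the attainment of the maximum and minimum. The rest is a direct unfolding of definitions.
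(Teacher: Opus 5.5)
Your proposal is correct and follows essentially the same route as the paper, which proves the equivalence in one line by unfolding the definitions of $\Delta(S)$ and $\mathcal C_u(\eta)$. Your extra care with the empty design (the zero convention together with $\eta>0$), with attainment of the maxima, and with existence of a minimum-cost design is sound; it makes explicit what the paper leaves implicit.
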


\begin{proof}
By definition, $\Delta(S)\geq\eta$ exactly when every pair belongs to the
cover set of at least one selected module.
\end{proof}

An exact integer-programming formulation uses a binary variable $x_u$ for each
module:
\begin{equation}
    \begin{aligned}
        \text{minimize}\quad
            &\sum_{u\in\mathcal U}\kappa_u x_u,\\
        \text{subject to}\quad
            &\sum_{u:(i,j)\in\mathcal C_u(\eta)}x_u\geq1,
            &&(i,j)\in\mathcal J,\\
            &x_u\in\{0,1\},
            &&u\in\mathcal U.
    \end{aligned}
    \label{eq:set-cover-ilp}
\end{equation}

The equivalent budgeted margin is
\begin{equation}
    \Delta^\star(B)
    :=
    \max_{S\subseteq\mathcal U:\,C(S)\leq B}\Delta(S).
    \label{eq:budgeted-margin}
\end{equation}
For every $\eta$,
\begin{equation}
    \Delta^\star(B)\geq\eta
    \quad\Longleftrightarrow\quad
    C(S_\eta^\star)\leq B.
    \label{eq:margin-cost-equivalence}
\end{equation}
Because the dictionary is finite, $\Delta^\star(B)$ belongs to the finite set
$\{0\}\cup\{d_{ij}(u):(i,j)\in\mathcal J,u\in\mathcal U\}$.  Sorting these
values and solving the exact cover feasibility problem recovers
$\Delta^\star(B)$.  An approximate cover solver instead gives a relaxed
budget guarantee, as described below.

\section{Learning the cover instance from calibration samples}
\label{sec:calibration}

Let
\begin{equation}
    \mathcal A_0:=\bigcup_{u\in\mathcal U}\mathcal A_u,
    \qquad
    A_0:=|\mathcal A_0|.
    \label{eq:action-pool}
\end{equation}
During calibration, assume independent sample access to every candidate under
every evaluation in $\mathcal A_0$.  Thus, for each $(i,a)$, one may draw
fresh samples from $Q_i^a$.  These samples come from the known candidate
models and are separate from the later interaction with the unknown model.

A sample from one candidate--evaluation pair is reused in all pairwise scores
involving that candidate and in all modules containing that evaluation.  We index the samples by
$[M]\times\mathcal A_0$.

\subsection{Two margins}

A finite-sample method cannot uniformly decide whether a score is at least a
single threshold when the true score may lie arbitrarily close to that
threshold.  Fix
\begin{equation}
    0<\eta_-<\eta_+\leq2,
    \qquad
    g:=\eta_+-\eta_-.
    \label{eq:two-margins}
\end{equation}
The goal is to construct empirical cover sets $\widehat{\mathcal C}_u$ such
that
\begin{equation}
    \mathcal C_u(\eta_+)
    \subseteq
    \widehat{\mathcal C}_u
    \subseteq
    \mathcal C_u(\eta_-)
    \qquad\text{for every }u.
    \label{eq:cover-containment}
\end{equation}
The right inclusion guarantees that every retained pair is truly separated at
margin $\eta_-$.  The left inclusion guarantees that no pair separated at the
larger margin $\eta_+$ is lost.  Scores in $[\eta_-,\eta_+)$ may be classified
either way.  Let $\beta\in(0,1)$ be the calibration failure probability; it is
separate from the online identification error $\delta$.

\subsection{Uniform one-shot calibration}

Set
\begin{equation}
    \Lambda
    :=
    K\log2+\log\!\left(\frac{2mA_0}{\beta}\right),
    \qquad
    n
    :=
    \left\lceil\frac{16\Lambda}{g^2}\right\rceil.
    \label{eq:one-shot-n}
\end{equation}
Draw $n$ samples from every candidate--evaluation pair and let
$\widehat Q_i^a$ be the empirical categorical law.  Define
\begin{equation}
    \widehat d_{ij}(u)
    :=
    \max_{a\in\mathcal A_u}
    \|\widehat Q_i^a-\widehat Q_j^a\|_1
    \label{eq:empirical-module-score}
\end{equation}
and
\begin{equation}
    \widehat{\mathcal C}_u
    :=
    \left\{
        (i,j)\in\mathcal J:
        \widehat d_{ij}(u)
        \geq
        \frac{\eta_-+\eta_+}{2}
    \right\}.
    \label{eq:empirical-cover}
\end{equation}

\begin{lemma}[Uniform score accuracy]
\label{lem:uniform-score}
With probability at least $1-\beta$,
\begin{equation}
    |\widehat d_{ij}(u)-d_{ij}(u)|
    \leq\frac g2
    \label{eq:score-error}
\end{equation}
simultaneously for every $(i,j)\in\mathcal J$ and $u\in\mathcal U$.
\end{lemma}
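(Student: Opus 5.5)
We reduce the module-level statement to a uniform bound on the empirical laws of the individual candidate--evaluation pairs, and then control each such law through the variational form of the $\ell_1$ distance. First, for every $(i,j)\in\mathcal J$ and $u\in\mathcal U$, the triangle inequality together with the fact that a maximum over $a\in\mathcal A_u$ is $1$-Lipschitz in the sup norm gives
\[
    |\widehat d_{ij}(u)-d_{ij}(u)|
    \leq
    \max_{a\in\mathcal A_u}
    \Bigl|\,\|\widehat Q_i^a-\widehat Q_j^a\|_1-\|Q_i^a-Q_j^a\|_1\Bigr|
    \leq
    \max_{a\in\mathcal A_u}
    \Bigl(\|\widehat Q_i^a-Q_i^a\|_1+\|\widehat Q_j^a-Q_j^a\|_1\Bigr).
\]
It therefore suffices to show that, with probability at least $1-\beta$, we have $\|\widehat Q_i^a-Q_i^a\|_1\leq g/4$ simultaneously for every $i\in[M]$ and $a\in\mathcal A_0$. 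This single event covers every pair and every module at once, so no union bound over modules is needed.

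Second, for a fixed $(i,a)$ we bound the deviation of the empirical law using the identity already used in the proof of Theorem~\ref{thm:witness-bound}:
\[
    \|\widehat Q-Q\|_1=2\max_{E\subseteq[K]}\bigl(\widehat Q(E)-Q(E)\bigr).
\]
For each fixed $E$, the quantity $\widehat Q(E)$ is an average of $n$ independent indicators. Hoeffding's inequality then gives $\mathbb P(\widehat Q(E)-Q(E)>g/8)\leq\exp(-2n(g/8)^2)=\exp(-ng^2/32)$. A union bound over the $2^K$ subsets yields
\[
    \mathbb P\bigl(\|\widehat Q_i^a-Q_i^a\|_1>g/4\bigr)\leq 2^K\exp(-ng^2/32).
\]
Finally, we take a union bound over the $MA_0$ pairs $(i,a)$. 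Since $M\leq 2m$ for $M\geq2$, the total failure probability is at most $2mA_0\,2^K e^{-ng^2/32}$.

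The main obstacle is the constants. With $n\geq16\Lambda/g^2$, the exponent above becomes only $\Lambda/2$, which is not small enough. We would therefore refine the argument in two ways.
\begin{itemize}
\item Instead of splitting $g/2$ into two halves of $g/4$, we bound $\|\widehat Q_i^a-\widehat Q_j^a\|_1-\|Q_i^a-Q_j^a\|_1$ directly. By the variational identity it equals $2\bigl[(\widehat Q_i-Q_i)(E)-(\widehat Q_j-Q_j)(E)\bigr]$ up to an optimizing $E$, and the bracket is an average of $n$ independent variables ranging in $[-1,1]$. The event $|\widehat d-d|>g/2$ thus requires a one-sided deviation of $g/4$ for some $E$ and some $a$. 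Hoeffding for variables of range $2$ gives $\exp(-2n(g/4)^2/4)=\exp(-ng^2/32)$ again.
\item We use the one-sided form over the $2^K$ sets, which already includes complements. The pair-level union bound is over $m$ pairs and $A_0$ evaluations in the two directions, giving the factor $2mA_0$ that appears in $\Lambda$.
\end{itemize}
If the constant still comes out as $32$ rather than $16$, we would check whether the intended per-set bound is $g/4$ with variables in $[0,1]$ per model. That would give $\exp(-2n(g/4)^2)=\exp(-ng^2/8)\leq e^{-2\Lambda}\leq e^{-\Lambda}$ and hence close with room to spare. If so, we would present that version and adjust the bookkeeping so that the two one-sided deviations combine into at most $g/2$.
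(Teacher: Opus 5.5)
\paragraph{A gap in the proposal.}
Your first bullet, which bounds the witness $\widehat W-W=(\widehat Q_i^a-Q_i^a)(E)-(\widehat Q_j^a-Q_j^a)(E)$ at deviation $g/4$, is the right decomposition. However, the proposal never reaches the stated sample size, and it ends on a fallback that does not prove the lemma.

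Both of your Hoeffding applications lose a factor of $2$ in the exponent:
\begin{itemize}
\item The triangle-inequality route needs a per-model, per-set deviation of $g/8$. This gives $e^{-2n(g/8)^2}=e^{-ng^2/32}$.
\item Pairing the $k$-th samples of $Q_i^a$ and $Q_j^a$ into $n$ variables with values in $[-1,1]$ gives $\sum_k c_k^2=4/n$, and again $e^{-ng^2/32}$.
\end{itemize}
With $n\geq16\Lambda/g^2$ this only gives $e^{-ng^2/32}\leq e^{-\Lambda/2}$. The union bound is then $2mA_0 2^K e^{-\Lambda/2}=\sqrt{2mA_0 2^K\beta}$, which exceeds $\beta$.

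The final fallback is not a matter of bookkeeping. Suppose all you know is $|\widehat Q_i^a(E)-Q_i^a(E)|\leq g/4$ for each model and each set. Then $|\widehat W-W|$ can reach $g/2$. Through $\|p-q\|_1=2\max_E(p(E)-q(E))$, the error in $\|\widehat Q_i^a-\widehat Q_j^a\|_1$ can then reach $g$, twice what the lemma asserts.

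\paragraph{The missing step.}
Use the independence of the calibration samples for $(i,a)$ and $(j,a)$ without pairing them.
\begin{itemize}
\item $\widehat W-W$ is a sum of $2n$ independent centered terms, each in an interval of length $1/n$, so $\sum_k c_k^2=2/n$. The reason this beats pairing is that a paired difference of two indicators has squared range $4$, whereas the two indicators separately contribute only $1+1=2$.
\item Hoeffding then gives $\mathbb P(|\widehat W-W|>g/4)\leq2\exp(-ng^2/16)$.
\item A union bound over the $mA_0 2^K$ tuples $(i,j,a,E)$ bounds the failure probability by $2mA_0 2^K e^{-ng^2/16}\leq 2mA_0 2^K e^{-\Lambda}=\beta$.
\item On the complementary event, the variational identity gives $\bigl|\|\widehat Q_i^a-\widehat Q_j^a\|_1-\|Q_i^a-Q_j^a\|_1\bigr|\leq g/2$ for every pair and evaluation.
\item Taking the maximum over $a\in\mathcal A_u$ preserves this bound.
\end{itemize}
That is exactly the paper's argument.
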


\begin{proof}
For categorical laws $p,q$ on $[K]$,
\begin{equation}
    \|p-q\|_1
    =
    2\max_{E\subseteq[K]}\bigl(p(E)-q(E)\bigr).
    \label{eq:l1-event-identity}
\end{equation}
Fix $i<j$, $a\in\mathcal A_0$, and $E\subseteq[K]$.  Write
\[
    W:=Q_i^a(E)-Q_j^a(E),
    \qquad
    \widehat W:=\widehat Q_i^a(E)-\widehat Q_j^a(E).
\]
The variable $\widehat W-W$ is a sum of $2n$ independent centered terms, each
with range length $1/n$.  Hoeffding's inequality \cite{hoeffding1963} gives
\[
    \mathbb P\!\left(
        |\widehat W-W|>\frac g4
    \right)
    \leq
    2\exp\!\left(-\frac{ng^2}{16}\right).
\]
There are $mA_0 2^K$ choices of $(i,j,a,E)$.  By
\eqref{eq:one-shot-n}, a union bound shows that all witness errors are at most
$g/4$ with probability at least $1-\beta$.  On this event,
\eqref{eq:l1-event-identity} gives
\[
    \left|
        \|\widehat Q_i^a-\widehat Q_j^a\|_1
        -
        \|Q_i^a-Q_j^a\|_1
    \right|
    \leq\frac g2
\]
for every pair and evaluation.  Taking a maximum over the evaluations in a
module preserves the same error bound.
\end{proof}

\begin{theorem}[One-shot two-margin guarantee]
\label{thm:one-shot}
Let the cover solver have approximation factor $\alpha\geq1$.  With
probability at least $1-\beta$, the empirical sets in
\eqref{eq:empirical-cover} satisfy \eqref{eq:cover-containment}.  On the same
event:
\begin{enumerate}
    \item every design that covers the empirical pair sets is
    $\eta_-$-separating;
    \item if an $\eta_+$-separating design exists, the solver returns a design
    $\widehat S$ satisfying
    \begin{equation}
        C(\widehat S)
        \leq
        \alpha C(S_{\eta_+}^\star);
        \label{eq:one-shot-cost}
    \end{equation}
    \item if the empirical cover instance is infeasible, then no
    $\eta_+$-separating design exists;
    \item every returned empirical cover $\widehat S$ satisfies
    \begin{equation}
        V_\delta(\gamma_{\widehat S};b_0)
        \leq
        (M-1)
        \left\lceil
            \frac{8}{\eta_-^2}
            \log\frac{M-1}{\delta}
        \right\rceil.
        \label{eq:one-shot-online-bound}
    \end{equation}
\end{enumerate}
The calibration stage uses exactly $nMA_0$ samples.
\end{theorem}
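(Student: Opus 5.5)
The plan is to work entirely on the event $\mathcal E$ of Lemma~\ref{lem:uniform-score}, which has probability at least $1-\beta$. On $\mathcal E$ we have $|\widehat d_{ij}(u)-d_{ij}(u)|\leq g/2$ for every pair and module. Every other claim then follows deterministically from the exact reduction in Proposition~\ref{prop:set-cover} and the online guarantee of Theorem~\ref{thm:witness-bound}. The sample count is immediate: $n$ samples are drawn for each of the $M A_0$ candidate--evaluation pairs, and these samples are shared across modules and pairs.

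\textbf{The containment \eqref{eq:cover-containment}.} Write $\bar\eta:=(\eta_-+\eta_+)/2$, so that $\eta_+-g/2=\bar\eta$ and $\eta_-+g/2=\bar\eta$.
\begin{itemize}
\item Left inclusion: if $(i,j)\in\mathcal C_u(\eta_+)$, then $\widehat d_{ij}(u)\geq d_{ij}(u)-g/2\geq\bar\eta$, so $(i,j)\in\widehat{\mathcal C}_u$.
\item Right inclusion: if $(i,j)\in\widehat{\mathcal C}_u$, then $d_{ij}(u)\geq\widehat d_{ij}(u)-g/2\geq\eta_-$.
\end{itemize}
The only care needed is at the boundary: both the lemma and the threshold \eqref{eq:empirical-cover} use non-strict inequalities, so equality cases are handled correctly.

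\textbf{Items 1--3.}
\begin{itemize}
\item Item~1: if $\bigcup_{u\in S}\widehat{\mathcal C}_u=\mathcal J$, the right inclusion gives $\bigcup_{u\in S}\mathcal C_u(\eta_-)=\mathcal J$. Proposition~\ref{prop:set-cover} then says $S$ is $\eta_-$-separating.
\item Item~2: any $\eta_+$-separating design, in particular $S^\star_{\eta_+}$, covers the sets $\mathcal C_u(\eta_+)$ by Proposition~\ref{prop:set-cover}. By the left inclusion it also covers the empirical sets. So the empirical instance is feasible, and its optimum costs at most $C(S^\star_{\eta_+})$. An $\alpha$-approximate solver therefore returns $\widehat S$ with $C(\widehat S)\leq\alpha\,C(S^\star_{\eta_+})$. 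Here the approximation factor is understood relative to the empirical optimum.
\item Item~3 is the contrapositive of the feasibility part of item~2.
\end{itemize}

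\textbf{Item 4.} A returned $\widehat S$ covers the empirical sets, so by item~1 it is $\eta_-$-separating with $\eta_->0$. Theorem~\ref{thm:witness-bound} applied with $\eta=\eta_-$ gives \eqref{eq:one-shot-online-bound}.

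No step is a real obstacle. The one point worth checking is that the theorem is used on the true laws $Q_i^a$, not the empirical ones: the online bound rests only on the true separation certified by item~1. It therefore holds with no further probability loss beyond the calibration event $\mathcal E$.
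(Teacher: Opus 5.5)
Your proposal is correct and follows the paper's proof essentially step for step. You work on the event of Lemma~\ref{lem:uniform-score}, derive both inclusions from the $g/2$ score accuracy, and then get items 1--4 in the same way: item 1 from the right inclusion, item 2 from feasibility of $S^\star_{\eta_+}$ via the left inclusion, and item 4 from Theorem~\ref{thm:witness-bound} at $\eta=\eta_-$. The only cosmetic difference is item 3: you take it as the contrapositive of the feasibility part of item 2, whereas the paper argues directly that an empirically uncovered pair lies in no $\mathcal C_u(\eta_+)$, and these are logically the same.
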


\begin{proof}
Work on the event in Lemma~\ref{lem:uniform-score}.  If
$d_{ij}(u)\geq\eta_+$, then
\[
    \widehat d_{ij}(u)
    \geq
    \eta_+-\frac g2
    =
    \frac{\eta_-+\eta_+}{2},
\]
so $(i,j)\in\widehat{\mathcal C}_u$.  Conversely, if
$(i,j)\in\widehat{\mathcal C}_u$, then
\[
    d_{ij}(u)
    \geq
    \frac{\eta_-+\eta_+}{2}-\frac g2
    =\eta_-.
\]
This proves \eqref{eq:cover-containment}.

Any empirical cover is therefore a true cover at margin $\eta_-$, proving the
first claim.  The left inclusion in \eqref{eq:cover-containment} makes
$S_{\eta_+}^\star$ feasible for the empirical instance.  Hence the empirical
optimum costs at most $C(S_{\eta_+}^\star)$, which gives
\eqref{eq:one-shot-cost}.  If the empirical instance is infeasible, some pair
belongs to no empirical cover set; the left inclusion then shows that this
pair belongs to no $\mathcal C_u(\eta_+)$.  The final claim follows from
Theorem~\ref{thm:witness-bound}.
\end{proof}

\end{document}